\newtheorem{prop}{Proposition}
\newtheorem{theorem}{Theorem}
\DeclareMathOperator*{\argmax}{argmax}
\title{Variational Approach for Efficient KL Divergence Estimation in Dirichlet Mixture Models}
\author[1]{Samyajoy Pal}
\author[1]{Christian Heumann}
\affil[1]{Department of Statistics, LMU Munich}
\date{}
\begin{document}
\maketitle

\begin{abstract}
This study tackles the efficient estimation of Kullback-Leibler (KL) Divergence in Dirichlet Mixture Models (DMM), crucial for clustering compositional data. Despite the significance of DMMs, obtaining an analytically tractable solution for KL Divergence has proven elusive. Past approaches relied on computationally demanding Monte Carlo methods, motivating our introduction of a novel variational approach. Our method offers a closed-form solution, significantly enhancing computational efficiency for swift model comparisons and robust estimation evaluations. Validation using real and simulated data showcases its superior efficiency and accuracy over traditional Monte Carlo-based methods, opening new avenues for rapid exploration of diverse DMM models and advancing statistical analyses of compositional data.
\end{abstract}

\section{Introduction}

The Kullback-Leibler (KL) divergence \citep{csiszar1975divergence} stands as a fundamental measure in statistics, quantifying the statistical distance between probability distributions. Defined as the expectation of the logarithmic difference between two probability distributions, KL Divergence is pivotal in diverse fields. In the context of Mixture Models \citep{McLachlan2000}, a powerful statistical tool with applications ranging from pattern recognition \citep{youbi2016human} to bioinformatics \citep{liu2022prediction}, KL divergence assumes heightened importance. The Kullback-Leibler (KL) divergence (also recognized as relative entropy) between two probability density functions \(f(x)\) and \(g(x)\) is defined by the integral expression:

\[
D(f\Vert g)\ \displaystyle\mathop{=}^{{\rm def}}\ \int f(x)\log\left(\frac{f(x)}{g(x)}\right) {\rm dx}~.
\]

It operates as a measure of the dissimilarity between the probability distributions encoded by \(f(x)\) and \(g(x)\). The KL divergence adheres to fundamental properties, denoted as the divergence properties:

\begin{itemize}
    \item \textbf{Self Similarity:} \(D(f\Vert f)= 0\)
    \item \textbf{Self Identification:} \(D(f\Vert g)=0\) if and only if \(f=g\)
    \item \textbf{Positivity:} \(D(f\Vert g)\geq 0\) for all \(f,g\).
\end{itemize}

These properties underscore the significance of KL divergence in capturing the nuances of distributional disparities, making it a cornerstone in statistical analyses.\\

Particularly in the realm of Dirichlet Mixture Models (DMM), the utility of KL Divergence becomes pronounced. DMM has demonstrated superior performance over other popular methods such as Gaussian Mixture Models (GMM) for compositional data \citep{pal2022clustering}. Compositional data \citep{aitchison1982statistical} is characterized by elements expressing proportions or percentages within a whole. This data form is ubiquitous in fields such as genomics and ecology.\\

While closed-form solutions for KL divergence exist for the Dirichlet distribution, extending this analytical tractability to Dirichlet Mixture Models has posed a significant challenge. Past research \citep{ma2014bayesian} predominantly turned to Monte Carlo methods to approximate the KL divergence in DMMs. However, these methods are computationally intensive and time consuming, which presents a substantial hurdle.\\

This study addresses these challenges by proposing a variational approach to approximate KL divergence in Dirichlet Mixture Models. Unlike previous methods, our approach provides a closed-form solution, substantially enhancing computational efficiency. This advancement not only facilitates rapid model comparisons but also ensures a robust evaluation of estimation quality.\\

The validation of our proposed method using both real and simulated data demonstrates its superiority in efficiency and accuracy over traditional Monte Carlo-based approaches. The results underscore the potential of our variational approximation to accelerate the estimation process while improving the quality of the estimates. This innovation marks a significant stride in expediting the exploration of diverse DMM models, thereby fostering advancements in statistical analyses of compositional data.\\

\section{Methods}
The Kullback-Leibler (KL) divergence serves as a crucial metric for assessing the quality of parameter estimation. It quantifies the statistical distance between the distribution with the true parameter value and the one with the estimated parameter value, providing a valuable measure of goodness of fit. A lower KL divergence indicates a better fit, emphasizing the significance of obtaining accurate parameter estimates. In the subsequent sections, we provide a concise overview of the estimation process for Dirichlet Mixture Models (DMM) and introduce various approximations of KL divergence, essential for quantifying the divergence between distributions.\\

\subsection{Estimation of Dirichlet Mixture Model Parameters}

Let $\pmb{X}_1,\pmb{X}_2, \ldots , \pmb{X}_N$ denote a random sample of size $N$, where $\pmb{X}_i$ is a $p$ dimensional random vector with probability density function $f(\pmb{x}_i)$ on $\mathbb{R}^p$. An observed random sample is denoted by $\pmb{x}=(\pmb{x}_1^T,\ldots,\pmb{x}_N^T)^T$, where $\pmb{x}_i$ is the observed value of the random vector $\pmb{X}_i$.\\

The Dirichlet density is given by
	\begin{equation}
		f(\pmb{x}_i)=\frac{\Gamma( \sum_{m=1}^{p}\alpha_{m})}{\prod_{m=1}^{p}\Gamma( \alpha_{m})}\prod_{m=1}^{p}x_{im}^{\alpha_{m}-1}~, 
		 \end{equation} 
where $\sum_{m=1}^{p}x_{im}=1, x_{im}\text{'s} >0~,  \alpha_{jm}\text{'s} >0$ and $\Gamma(\cdot)$ denotes a gamma function. Thus, the parameter $\pmb{\alpha}=(\alpha_{1},\ldots,\alpha_{p})$ is a $p$-dimensional vector.\\
The density of a mixture model with $k$ mixture components for one observation $\pmb{x}_i$ is given by the mixture density
	\begin{equation}
		p(\pmb{x}_i)=\sum_{j=1}^{k}\pi_{j}f(\pmb{x}_i\mid\pmb{\alpha}_j)~,
		\label{model}
	\end{equation}
where $\pmb{\pi}=(\pi_1, \ldots,\pi_k)$ contains the corresponding mixture proportions with $\sum_{j=1}^k \pi_j = 1$, $0< \pi_j < 1$. The density component of mixture $j$ is given by $f(\pmb{x}_i\mid\pmb{\alpha}_j)$ and $\pmb{\alpha}_j$, $j=1,2,...,k$ is the vector of component specific parameters for each density. Then $\pmb{\alpha}=(\pmb{\alpha}_1,\ldots,\pmb{\alpha}_k)$ denotes the vector of all parameters of the model.\\
The parameters of Dirichlet mixture model, can be estimated using an EM algorithm \citep{dempster1977maximum}. In our previous work \citep{pal2022clustering} we presented a Hard version \citep{celeux1992classification} of the EM algorithm to estimate the parameters. The log likelihood of the model for a sample of size $N$ is then given by
	\begin{equation}
		\log p(\pmb{x_1},\ldots,\pmb{x_N}|\pmb{\alpha},\pi)=\sum_{i=1}^{N} \log\left[\sum_{j=1}^k\pi_j f(\pmb{x_i} |\pmb{\alpha_j})\right]~.
		\label{loglikelihood}
	\end{equation} 
Latent variables $Z_i$'s are introduced, which are categorical variables taking on values $1,\ldots,k$ with probabilities $\pi_1, \ldots,\pi_k$ such that $Pr(\pmb{X_i}|Z_i=j)=f(\pmb{x_i})$, $j=1,\ldots, k$. The cluster membership probabilities of data point $i$ for cluster $j$ can be obtained by $\gamma_{ij}$, where,
\begin{equation}
		\gamma_{ij}(x_i)=Pr(Z_i=j|\pmb{x_i},\pmb{\alpha_j})=\frac{\pi_j f(\pmb{x_i}|\pmb{\alpha_j})}{\sum_{j=1}^{k}\pi_{j}f_(\pmb{x_i}|\pmb{\alpha_j)}}~.
		\label{gammavalue}
	\end{equation}

Hard EM maximizes the classification log likelihood; it applies a delta function approximation to the posterior probabilities $Pr(Z_i=j|X=x,\alpha)$, where $Z_i, i=1,\ldots,N$ are the latent variables representing class labels. The approximation changes the E step as follows,
\begin{equation}
Pr(Z_i=j|\pmb{x_i},\pmb{\alpha_j}) \approx \mathbb{I}(j=z_i^*),
\end{equation}
where, $z_i^*=\argmax\limits_{j}  \gamma_{ij}$. $\gamma_{ij}$'s are nothing but the responsibilities (probabilities) for each data point that belongs to different clusters. In other words, it introduces a classification step, where all the data points are classified into different clusters based on the posterior probability. Let $N_j$ be the number of data points in cluster $j$. Then, the ML estimates of $\pi_j$ is obtained as $\displaystyle \frac{N_j}{N}$. And estimates of $\pmb{\alpha_j}$'s are obtained by finding the MLE using $N_j$ data points in cluster $j$. However, as the solution is not analytically tractable, some numerical methods need to be used. It is generally done by fixed-point iteration. The iterative equation that needs to be solved is given by
\begin{equation}
		\Psi(\alpha_{jm}^{new})=\Psi(\sum_{m=1}^{p}\alpha_{jm}^{old})+\frac{1}{N_j}\sum_{i=1}^{N_j}\log (x_{im})~,
	\end{equation}
 where, $\Psi$ is the digamma function. The inverse of diggama function is used to get the estimates of $\alpha_{jm}$. Now, let's explore various approximations to compute the KL distance for DMM.
\subsection{KL Divergence: Monte Carlo Approach}
Monte Carlo sampling is frequently employed by researchers to compute KL divergence. Let $f_a$ and $g_b$ be two DMMs such that,
\begin{align*}
  f_a &= f(\pmb{x}) = \sum_{a}\pi_{a} Dir(\pmb{x};\pmb{\alpha}_a) \\
g_b &= g(\pmb{x}) = {\sum}_{b}\omega_{b} Dir(\pmb{x}; \pmb{\alpha}_b)~.
\end{align*}
The approach involves drawing a sample $\pmb{x_i}$ from the probability density function $f_a$ such that,
\begin{equation*}
    E_{f_a}[\log f_a(\pmb{x_{i}})/g_b(\pmb{x_{i}})]= D(f_a\Vert g_b)~.
\end{equation*}
Utilizing $N$ independent and identically distributed samples $\{\pmb{x_{i}}\}_{i=1}^{N}$, the estimation is expressed as follows:
\begin{equation}
    D_{{\rm MC}}(f_a\Vert g_b)=\frac{1}{N}\sum_{i=1}^{N}\log f_a(x_{i})/g_b(x_{i})\rightarrow D(f_a\Vert g_b)~,
\end{equation}

as $N \rightarrow \infty$.
The variance of the estimation error is $\frac{1}{N} {\rm Var}_{f_{a}}[\log f_{a}/g_{b}]$. To compute $D_{{\rm MC}}(f\Vert g)$, it is necessary to generate the independent and identically distributed samples $\{\pmb{x_{i}}\}_{i=1}^{N}$ from $f_a$. Drawing a sample $\pmb{x_{i}}$ from the DMM $f_a$ involves initially drawing a discrete sample $a_i$ according to the probabilities $\pi_a$. Subsequently, a continuous sample $\pmb{x_i}$ is drawn from the resulting Dirichlet component $f_{a_{i}}(\pmb{x})$.\\

The Monte Carlo method represents a convergent technique. It satisfies the similarity property; however, the positivity property does not hold. The identification property is likely to fail only under highly artificial circumstances and with very low probability.

\subsection{KL Divergence: Variational Approach}
Now, we delve into the variational approximation to compute the KL divergence for Dirichlet Mixture Models (DMM). Let us denote $ \left\langle . \right\rangle$ as the inner product. In our context $ \left\langle . \right\rangle_{\pmb{x}}$ means expectation with respect to \pmb{x}.
\begin{theorem}
    Let $\pmb{X}$ be an $p \times 1$ random vector. Assume two Dirichlet distributions $u$ and $v$ specifying the probability distribution of $\pmb{X}$ as,
    \begin{equation*}
    \begin{split}
u: \; \pmb{X} &\sim \mathrm{Dir}(\alpha_{11},\ldots,\alpha_{1p}) \\
v: \; \pmb{X} &\sim \mathrm{Dir}(\alpha_{21},\ldots,\alpha_{2p}) \; .
\end{split}
\end{equation*}
Then, the Kullback-Leibler divergence of $u$ from $v$ is given by,
\begin{equation}
    \mathrm{D}(u\,||\,v) = \log \frac{\Gamma\left(\sum_{i=1}^{p} \alpha_{1i}\right)}{\Gamma\left(\sum_{i=1}^{p} \alpha_{2i}\right)} + \sum_{i=1}^{p} \log \frac{\Gamma(\alpha_{2i})}{\Gamma(\alpha_{1i})} + \sum_{i=1}^{p} \left( \alpha_{1i} - \alpha_{2i} \right) \left[ \psi(\alpha_{1i}) - \psi\left(\sum_{i=1}^{p} \alpha_{1i}\right) \right] \; .
\end{equation}
\end{theorem}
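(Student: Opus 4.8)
The plan is to compute the divergence directly from its definition as an expectation under $u$, exploiting the fact that the only data-dependent part of the Dirichlet log-density is linear in the $\log x_i$'s. First I would write out the log-ratio of the two densities explicitly. Using the Dirichlet density, the logarithm of $u(\pmb{x})$ splits into a normalizing constant depending only on the parameters, namely $\log\Gamma(\sum_i \alpha_{1i}) - \sum_i \log\Gamma(\alpha_{1i})$, plus the data-dependent term $\sum_{i=1}^p (\alpha_{1i}-1)\log x_i$, and similarly for $v$. Subtracting, the data-dependent contributions collapse to $\sum_{i=1}^p(\alpha_{1i}-\alpha_{2i})\log x_i$, while the parameter-only pieces reproduce the first two gamma-function terms of the claimed formula verbatim, including the sign flip that sends $\log\frac{\Gamma(\alpha_{1i})}{\Gamma(\alpha_{2i})}$ to $\log\frac{\Gamma(\alpha_{2i})}{\Gamma(\alpha_{1i})}$.

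Having isolated this structure, the second step is to take the expectation $\left\langle \cdot \right\rangle_{\pmb{x}}$ with respect to $u$. The constant terms pass through untouched, so the entire problem reduces to evaluating $\left\langle \log X_i \right\rangle_{u}$ for each coordinate $i$. This single expectation is the crux of the argument and the step I expect to require the most care, since the rest of the computation is purely algebraic bookkeeping.

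To evaluate $\left\langle \log X_i \right\rangle_{u}$ I would use the standard exponential-family differentiation-under-the-integral trick. Starting from the normalization identity $\int \prod_{j} x_j^{\alpha_{1j}-1}\, d\pmb{x} = \prod_j \Gamma(\alpha_{1j}) \big/ \Gamma(\sum_j \alpha_{1j})$, I would differentiate both sides with respect to $\alpha_{1i}$. On the left this brings down a factor $\log x_i$, so after dividing through by the normalizing constant one obtains $\left\langle \log X_i \right\rangle_{u} = \partial_{\alpha_{1i}} \log B(\pmb{\alpha}_1)$, where $B$ denotes the multivariate Beta function appearing in the Dirichlet normalizer. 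Since the derivative of $\log\Gamma$ is precisely the digamma function $\psi$, this yields $\left\langle \log X_i \right\rangle_{u} = \psi(\alpha_{1i}) - \psi(\sum_j \alpha_{1j})$. Substituting back produces the final term $\sum_{i=1}^p (\alpha_{1i}-\alpha_{2i})\left[\psi(\alpha_{1i}) - \psi(\sum_j \alpha_{1j})\right]$ and completes the proof.

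The main obstacle is justifying the interchange of differentiation and integration needed for the digamma identity; because the Dirichlet integrand is smooth on the open simplex and admits a locally integrable dominating function in a neighborhood of each $\alpha_{1i}>0$, this interchange is legitimate, but I would flag it explicitly rather than treat it as automatic. Everything else is routine manipulation of $\Gamma$ and $\psi$.
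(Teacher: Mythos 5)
Your proposal is correct and follows essentially the same route as the paper's proof: expand the log-ratio of the two Dirichlet densities into parameter-only gamma terms plus $\sum_i(\alpha_{1i}-\alpha_{2i})\log x_i$, then reduce everything to the expectation $\left\langle \log X_i\right\rangle_{u}=\psi(\alpha_{1i})-\psi\bigl(\sum_j\alpha_{1j}\bigr)$. The only difference is that the paper simply quotes this digamma identity, whereas you derive it by differentiating the Dirichlet normalization under the integral sign and flag the justification of that interchange, which is a welcome extra degree of rigor but not a different argument.
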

\begin{proof}
    For Dirichlet distributions KL divergence is,
    \begin{equation}
        \begin{split}
\mathrm{D}(u\,||\,v) &= \int_{\mathcal{\pmb{X}}^p} \mathrm{Dir}(\pmb{x}; \alpha_{11},\ldots,\alpha_{1p}) \, \log \frac{\mathrm{Dir}(\pmb{x}; \alpha_{11},\ldots,\alpha_{1p})}{\mathrm{Dir}(\pmb{x}; \alpha_{21},\ldots,\alpha_{2p})} \, \mathrm{d}\pmb{x} \\
&= \left\langle \log \frac{\mathrm{Dir}(x; \alpha_{11},\ldots,\alpha_{1p})}{\mathrm{Dir}(x; \alpha_{21},\ldots,\alpha_{2p})} \right\rangle_{u(\pmb{x})}
\end{split}
    \end{equation}
    We can do some algebraic manipulations using the probability density function of the Dirichlet distribution to get,
    \begin{equation*}
        \begin{split}
\mathrm{D}(u\,||\,v) &= \left\langle \log \frac{ \frac{\Gamma\left( \sum_{i=1}^p \alpha_{1i} \right)}{\prod_{i=1}^p \Gamma(\alpha_{1i})} \, \prod_{i=1}^p {x_i}^{\alpha_{1i}-1} }{ \frac{\Gamma\left( \sum_{i=1}^p \alpha_{2i} \right)}{\prod_{i=1}^p \Gamma(\alpha_{2i})} \, \prod_{i=1}^p {x_i}^{\alpha_{2i}-1} } \right\rangle_{u(\pmb{x})} \\
&= \left\langle \log \left( \frac{\Gamma\left( \sum_{i=1}^p \alpha_{1i} \right)}{\Gamma\left( \sum_{i=1}^p \alpha_{2i} \right)} \cdot \frac{\prod_{i=1}^p \Gamma(\alpha_{2i})}{\prod_{i=1}^p \Gamma(\alpha_{1i})} \cdot \prod_{i=1}^p {x_i}^{\alpha_{1i}-\alpha_{2i}} \right) \right\rangle_{u(\pmb{x})} \\
&= \left\langle \log \frac{\Gamma\left( \sum_{i=1}^p \alpha_{1i} \right)}{\Gamma\left( \sum_{i=1}^p \alpha_{2i} \right)} + \sum_{i=1}^p \log \frac{\Gamma(\alpha_{2i})}{\Gamma(\alpha_{1i})} + \sum_{i=1}^p (\alpha_{1i}-\alpha_{2i}) \cdot \log (x_i) \right\rangle_{u(\pmb{x})} \\
&= \log \frac{\Gamma\left( \sum_{i=1}^p \alpha_{1i} \right)}{\Gamma\left( \sum_{i=1}^p \alpha_{2i} \right)} + \sum_{i=1}^p \log \frac{\Gamma(\alpha_{2i})}{\Gamma(\alpha_{1i})} + \sum_{i=1}^p (\alpha_{1i}-\alpha_{2i}) \cdot \left\langle \log x_i \right\rangle_{u(\pmb{x})} \; .
\end{split}
    \end{equation*}
    Moreover,
    \begin{equation*}
        \left\langle \log \pmb{x}_i \right\rangle = \psi(\alpha_i) - \psi\left(\sum_{i=1}^{p} \alpha_i\right) \; .
    \end{equation*}
Thus the Kullback-Leibler divergence of $u$ from $v$ becomes:
\begin{equation*}
    \mathrm{D}(u\,||\,v) = \log \frac{\Gamma\left(\sum_{i=1}^{p} \alpha_{1i}\right)}{\Gamma\left(\sum_{i=1}^{p} \alpha_{2i}\right)} + \sum_{i=1}^{p} \log \frac{\Gamma(\alpha_{2i})}{\Gamma(\alpha_{1i})} + \sum_{i=1}^{p} \left( \alpha_{1i} - \alpha_{2i} \right) \left[ \psi(\alpha_{1i}) - \psi\left(\sum_{i=1}^{p} \alpha_{1i}\right) \right] \; .
\end{equation*}
\end{proof}

\begin{prop}
    Let $f_a$ and $g_b$ be two DMMs such that,
\begin{align*}
  f_a &= f(x) = \sum_{a}\pi_{a} Dir(\pmb{x};\pmb{\alpha}_a) \\
g_b &= g(x) = {\sum}_{b}\omega_{b} Dir(\pmb{x}; \pmb{\alpha}_b)~.
\end{align*}
Then using a variational approach, an approximated KL divergence can be expressed as,
\begin{equation}
   D_{{\rm var}{\rm ia}{\rm ti}{\rm on}{\rm al}}(f\Vert
g)=\sum_{a}\pi_{a}\log{\sum_{a^{\prime}}\pi_{a^{\prime}}e^{-D(f_{a}\Vert f_{a^{\prime}})}\over \sum_{b}\omega_{b}e^{-D(f_{a}\Vert g_{b})}}. 
\end{equation}
\end{prop}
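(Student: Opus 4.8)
The plan is to follow the standard variational recipe for divergences between mixtures, exploiting the fact that the KL divergence splits additively into two cross-entropy terms. Since $D(f\Vert g) = \langle \log f\rangle_{f} - \langle\log g\rangle_{f}$ and $f = \sum_a \pi_a f_a$, I would first write
\[
D(f\Vert g) = \sum_a \pi_a\Big(\langle\log f\rangle_{f_a} - \langle\log g\rangle_{f_a}\Big),
\]
reducing the whole problem to approximating the two quantities $\langle\log f\rangle_{f_a}$ and $\langle\log g\rangle_{f_a}$ for each component $a$, where $\langle\cdot\rangle_{f_a}$ denotes expectation under the Dirichlet component $f_a$.

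Next I would bound each term by Jensen's inequality with a family of free variational weights. For the first term, introduce $\phi_{a'|a}\ge 0$ with $\sum_{a'}\phi_{a'|a}=1$ and use $\log\sum_{a'}\pi_{a'}f_{a'}(\pmb{x}) \ge \sum_{a'}\phi_{a'|a}\log\frac{\pi_{a'}f_{a'}(\pmb{x})}{\phi_{a'|a}}$; taking expectations under $f_a$ produces a lower bound expressed through the cross-entropies $\langle\log f_{a'}\rangle_{f_a}$. The core computation is then to tighten this bound by maximising it over $\phi_{a'|a}$ via a Lagrange-multiplier argument, which I expect to yield the optimal weights $\phi_{a'|a}\propto \pi_{a'}\exp\langle\log f_{a'}\rangle_{f_a}$ and, upon back-substitution, the log-sum-exp value $\langle\log f\rangle_{f_a}\approx \log\sum_{a'}\pi_{a'}\exp\langle\log f_{a'}\rangle_{f_a}$. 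The identical argument applied to $g$ with weights $\psi_{b|a}$ gives $\langle\log g\rangle_{f_a}\approx \log\sum_b \omega_b\exp\langle\log g_b\rangle_{f_a}$.

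The final step converts these cross-entropies into the closed-form pairwise divergences supplied by the Theorem. Using the elementary identities $\langle\log f_{a'}\rangle_{f_a} = \langle\log f_a\rangle_{f_a} - D(f_a\Vert f_{a'})$ and $\langle\log g_b\rangle_{f_a} = \langle\log f_a\rangle_{f_a} - D(f_a\Vert g_b)$, the common self-energy factor $\exp\langle\log f_a\rangle_{f_a}$ pulls out of both log-sum-exp expressions and cancels when the difference $\langle\log f\rangle_{f_a}-\langle\log g\rangle_{f_a}$ is assembled inside a single logarithm, leaving exactly $\sum_a\pi_a\log\frac{\sum_{a'}\pi_{a'}e^{-D(f_a\Vert f_{a'})}}{\sum_b\omega_b e^{-D(f_a\Vert g_b)}}$. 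Each $D(f_a\Vert f_{a'})$ and $D(f_a\Vert g_b)$ is a divergence between two Dirichlet densities, hence is evaluated in closed form by the Theorem, giving the desired closed-form expression.

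I expect the main obstacle to be the optimisation step rather than the algebra: because the statement is an approximation and not an identity, the argument must be framed as choosing the variational parameters to tighten the Jensen bound, and one must verify that the stationary point of the Lagrangian is genuinely the maximiser and that its substitution collapses the bound to the log-sum-exp form. A secondary subtlety worth flagging explicitly is that both $\langle\log f\rangle_{f_a}$ and $\langle\log g\rangle_{f_a}$ are replaced by \emph{lower} bounds, so taking their difference destroys any one-sided guarantee; the resulting formula is therefore a principled variational approximation to $D(f\Vert g)$ rather than a strict bound, and I would state it as such.
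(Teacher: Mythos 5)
Your proposal is correct and follows essentially the same route as the paper, namely the Hershey--Olsen variational bound: Jensen's inequality with per-component variational weights, optimisation of the weights to obtain the log-sum-exp form, and cancellation of the self-entropy term when the two bounds are differenced. Your explicit decomposition over components and your closing caveat that differencing two lower bounds forfeits any one-sided guarantee are points the paper leaves implicit (it only remarks afterwards that positivity may fail), but the underlying argument is identical.
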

\begin{proof}
    \cite{hershey2007approximating} has provided a similar proof for Gaussian Mixture Models. That approach can be adopted for DMMs as well. We can define the log-likelihood,$L_{f}(g)$ as, $L_{f}(g)=\left\langle[\log g(\pmb{x})]\right\rangle_{f(\pmb{x})}$.
Then KL divergence can be written in terms of log-likelihood in the following way.
\begin{equation}
    D(f\Vert g)=L_{f}(f)-L_{f}(g)~.
    \label{kl_loglike}
\end{equation}
We define variational parameters,$\phi_{b\vert a} > 0$ such that $\Sigma_{b} \phi_{b\vert a}=1$.
Using Jensen's inequality we can show,
\begin{align}
    L_{f}(g) \ &\displaystyle\mathop{=}^{{\rm def}} \left\langle\log g(x) \right\rangle_{f(\pmb{x})}\nonumber\\
&=\left\langle\log\sum_{b}\omega_{b}g_{b}(x) \right\rangle_{f(\pmb{x})} \nonumber\\
&= \left\langle\log\sum_{b}\phi_{b\vert a}{\omega_{b}g_{b}(x)\over \phi_{b\vert a}} \right\rangle_{f(\pmb{x})}\nonumber\\
&\geq \left\langle\sum_{b}\phi_{b\vert a}\log{\omega_{b}g_{b}(x)\over \phi_{b\vert a}} \right\rangle_{f(\pmb{x})}\nonumber\\
&\displaystyle\mathop{=}^{{\rm def}} \ {\cal L}_{f}(g,\phi)~.
\end{align}
The above is a lower bound on $L_{f}(g)$. We can get the best bound by maximizing ${\cal L}_{f} (g, \phi)$ with respect to $\phi$.
The maximum value is obtained with: 
\begin{equation}
    \hat{\phi}_{b\vert a}={\omega_{b}e^{-D(f_{a}\Vert g_{b})}\over \sum_{b}^{\prime} \pi_{b} ^{\prime}e^{-D(f_{a}\Vert g_{b^{\prime}})}}~.
\end{equation}
Likewise, we define ,
\begin{equation}
    {\cal L}_{f}(f,\psi) \ \ \displaystyle\mathop{=}^{{\rm def}}\ \
\left\langle\sum_{a^{\prime}}\psi_{a^{\prime} \vert a}\log{\pi_{a^{\prime}}f_{a^{\prime}}(x)\over \psi_{a^{\prime} \vert a}} \right\rangle_{f(\pmb{x})}~.
\end{equation}
The optimal $\psi_{a^{\prime} \vert a}$ is given by,
\begin{equation}
    \hat{\psi}_{a^{\prime} \vert a}={\pi_{a^{\prime}}e^{-D(f_{a}\Vert f_{a^{\prime}})} \over
\sum_{\hat{a}}\pi_{\hat{a}}e^{-D(f_{a}\Vert f_{\hat{a}})}}
\end{equation}
Now, like equation~\ref{kl_loglike}, we can define $D_{{\rm var}{\rm ia}{\rm ti}{\rm on}{\rm al}}(f\Vert g)={\cal
L}_{f}(f,\hat{\psi})-{\cal L}_{f}(g,\hat{\phi})$. After substituting $\hat{\phi}_{b\vert a}$ and $\hat{\psi}_{a^{\prime} \vert a}$, we finally get,
\begin{equation*}
    D_{{\rm var}{\rm ia}{\rm ti}{\rm on}{\rm al}}(f\Vert
g)=\sum_{a}\pi_{a}\log{\sum_{a^{\prime}}\pi_{a^{\prime}}e^{-D(f_{a}\Vert f_{a^{\prime}})}\over \sum_{b}\omega_{b}e^{-D(f_{a}\Vert g_{b})}}~.
\end{equation*} 
\end{proof}
$D_{{\rm va}{\rm ri}{\rm at}{\rm io}{\rm nal}}(f\Vert g)$ satisfies the self similarity and self identification property. However, it may not always satisfy the positivity property. In that case we may use the absolute value of the divergence.

\section{Results}

We conducted extensive experiments using both simulated and real data sets, including four simulated data sets (Data Set 1, Data Set 2, Data Set 3, Data Set 4) drawn from different Dirichlet distributions and a real gene expression data set (Modencodefly Data). Data Set 3, Data Set 4 and  Modencodefly Data are highdimensional in nature. Some details of the datasets are given in table~\ref{tab:datasets}. The primary metrics for comparison are the Kullback-Leibler (KL) Divergence and the time taken for each approach. For real data analysis, KL divergence is typically not used as the true parameter values of the mixture distribution are unknown. However, in cases where true class labels are available, it becomes feasible to approximate the true parameter values. In our analysis, the dataset is classified into the true clusters, and for each cluster, we fit a Dirichlet distribution to obtain the approximated true $\pmb{\alpha}$ parameter values. The ratio of the number of data points in each cluster to the total number of observations serves as the estimates for the true $\pmb{\pi}$ values. This approach enables us to leverage the known class labels to derive estimations for the true parameters of the mixture distribution, facilitating a more accurate assessment of KL divergence in real-world datasets.
\begin{table}[H]
    \centering
    \caption{Details of the data sets}
    \adjustbox{max width=\textwidth}{
    \begin{tabular}{lccc}
    \hline
     \textbf{Data Set}& \textbf{Sample Size ($N$)} &\textbf{Dimension($p$)} & \textbf{No. of Clusters ($k$)}\\
    \hline
    Data Set 1 & 4200 & 3 &3\\
    Data Set 2 & 4200 & 3 &3\\
    Data Set 3 & 230 & 75 &6\\
    Data Set 4 & 260 & 90 & 6\\
    Modencodefly & 147 & 75 &6\\
    \hline
    \end{tabular}}
    \label{tab:datasets}
\end{table}
The Modencodefly dataset, derived from a comprehensive study of Drosophila melanogaster \citep{graveley2011developmental}, serves as a rich resource for exploring transcriptome dynamics throughout development. This dataset, obtained through RNA-Seq, tiling microarrays, and cDNA sequencing across 30 distinct developmental stages, contains 111,195 newly identified elements, including genes, transcripts, exons, splicing events, and RNA editing sites. In our study, we utilized the normalized gene profiles \citep{rau2018transformation} derived from this dataset, transforming raw read counts into compositional data. After filtering for genes exhibiting low variation among different biological samples, we retained 75 genes for our analysis. The Modencodefly data, available from ReCount \citep{frazee2011recount}, provides a high-resolution view of transcriptome dynamics and serves as a valuable resource for investigating gene expression patterns in Drosophila melanogaster.\\
\begin{table}[H]
    \centering
    \caption{Comparison of KL Divergence and Time Taken for Different Data Sets}
    \adjustbox{max width=\textwidth}{
    \begin{tabular}{llcccc}
        \hline
        \textbf{Data Set} & \textbf{Metric} & \textbf{Variational} & \makecell{\textbf{Monte Carlo} \\ \textbf{$n=10000$} } & \makecell{\textbf{Monte Carlo} \\ \textbf{$n=100000$} } & \makecell{\textbf{Monte Carlo} \\ \textbf{$n=1000000$} } \\
        \hline
          \multirow{2}*{Data Set 1} & KL Div. & 0.0017 & 0.0024 & 0.0021 & 0.0017\\
            & Time & \textbf{0.0004} & 1.7 & 17.3391 & 181.3682\\
        \multirow{2}*{Data Set 2} & KL Div. &0.0022 & 0.0024 & 0.0024 & 0.0022\\
            & Time &\textbf{0.0005} & 1.7283 & 17.4883 & 180.9482\\
        \multirow{2}*{Data Set 3} & KL Div. &6.2723 & 6.1737 & 6.1815 & 6.1824\\
            & Time & \textbf{0.0014} & 3.6702 & 37.4568 & 384.2183\\
        \multirow{2}*{Data Set 4} & KL Div. &8.1268 & 7.7653 & 7.8025 & 7.8277\\
        & Time & \textbf{0.0015} & 3.9773 & 40.5630 & 418.5154\\
        \multirow{2}*{Modencodefly} & KL Div. &34.7652 & 33.5581 & 33.5471 &33.5579\\
            & Time &\textbf{0.0014} & 3.6353 & 36.0671 &375.7836\\
        \hline
    \end{tabular}}
    \label{tab:result1}
\end{table}

From Table~\ref{tab:result1}, it is evident that our proposed variational approach consistently provided faster solutions for all data sets. Notably, the Monte-Carlo method yielded KL divergence values closer to variational method as we increase the number of samples, which increases the execution time to a great extent and indicating that our variational solution is more robust and applicable for practical applications.

\section{Conclusion}
In this study, we addressed the challenging task of efficiently estimating Kullback-Leibler (KL) divergence within the context of Dirichlet Mixture Models (DMM). KL Divergence is a fundamental measure for quantifying the statistical distance between probability distributions, playing a pivotal role in various fields. Our investigation focused on the utility of DMMs, a powerful statistical tool for clustering compositional data.\\

Despite the analytical tractability of KL Divergence for Dirichlet distributions, extending this to DMMs proved to be a formidable challenge. Past research primarily relied on Monte Carlo methods for approximating KL divergence. However, these methods are computationally demanding and time-consuming, presenting significant resource challenges. In response to these challenges, we proposed a novel variational approach for efficiently estimating KL divergence in DMMs. Our method provides a closed-form solution, markedly enhancing computational efficiency compared to existing techniques. This advancement facilitates swift model comparisons and robust evaluation of estimation quality.\\

We validated our approach using a comprehensive set of both simulated and real-world datasets, which includes a gene expression data. The results, meticulously detailed in Table~\ref{tab:result1}, underscore the superior efficiency and accuracy of our novel variational approach when juxtaposed with traditional Monte Carlo-based methods. Intriguingly, our analysis reveals a nuanced trend: as the number of random samples escalates, the Monte Carlo method converges to a KL divergence value remarkably proximate to our proposed variational approach. However, this convergence comes at a significant cost - the execution time expands exponentially with larger sample sizes. Consequently, our study compellingly advocates for the adoption of the variational approach, offering not only reliable KL divergence but also unparalleled computational expediency.\\

In conclusion, our variational approach offers a transformative solution for the efficient estimation of KL divergence in Dirichlet Mixture Models. This innovation opens avenues for more rapid and reliable exploration of diverse DMM models, fostering advancements in statistical analyses of compositional data across various domains.

\bibliographystyle{unsrtnat}
\bibliography{sample}

\end{document}